\theoremstyle{definition}
\newtheorem{theorem}{Theorem}[section]
\theoremstyle{definition}
\newtheorem{definition}{Definition}[section]
\theoremstyle{remark}
\definecolor{GGreen}{RGB}{0,128,0}
\newcommand{\Op}[1]{\operatorname{\mathcal{#1}}}
\newcommand{\A}{\Op{A}}
\newcommand{\X}{\Op{X}}
\newcommand{\x}{x}
\newcommand{\y}{y}
\newcommand{\Y}{\Op{Y}}
\newcommand{\R}{\Op{R}}
\DeclareMathOperator*{\argmin}{arg\,min}
\title{Provably Convergent Data-Driven Convex-Nonconvex Regularization}
\author{%
  Zakhar Shumaylov\\
  University of Cambridge \\
  % Univ \\
  % City\\
  \texttt{zs334@cam.ac.uk} \\
  %% examples of more authors
  \And
  Jeremy Budd \\
  California Institute of Technology\\
  \texttt{jmbudd@caltech.edu} \\
  % Univ \\
  % City\\
  \And
  Subhadip Mukherjee \\
  IIT Kharagpur \\
  % Univ \\
  % City\\
  \texttt{subhadipju@gmail.com}
  \And
  Carola-Bibiane Sch\"onlieb \\
  University of Cambridge \\
  \texttt{cbs31@cam.ac.uk} \\
  % examples of more authors
  % \And
  % Coauthor \\
  % Affiliation \\
  % Address \\
  % \texttt{email} \\
  % \AND
  % Coauthor \\
  % Affiliation \\
  % Address \\
  % \texttt{email} \\
  % \And
  % Coauthor \\
  % Affiliation \\
  % Address \\
  % \texttt{email} \\
  % \And
  % Coauthor \\
  % Affiliation \\
  % Address \\
  % \texttt{email} \\
}
\begin{document}

\graphicspath{{media/}}
\maketitle

\begin{abstract}
An emerging new paradigm for solving inverse problems is via the use of deep learning to learn a regularizer from data. This leads to high-quality results, but often at the cost of provable guarantees. In this work, we show how well-posedness and convergent regularization arises within the convex-nonconvex (CNC) framework for inverse problems. We introduce a novel input weakly convex neural network (IWCNN) construction to adapt the method of learned adversarial regularization to the CNC framework. Empirically we show that our method overcomes numerical issues of previous adversarial methods.

  % The abstract paragraph should be indented \nicefrac{1}{2}~inch (3~picas) on
  % both the left- and right-hand margins. Use 10~point type, with a vertical
  % spacing (leading) of 11~points.  The word \textbf{Abstract} must be centered,
  % bold, and in point size 12. Two line spaces precede the abstract. The abstract
  % must be limited to one paragraph.

%For the title, you might want to highlight the fact that you have a nonconvex parameterization of the regularizer while being provably convergent. The fact that it is trained adversarially is not the focus/contribution of this paper. Therefore, something like `provably convergent data-driven convex-nonconvex regularization' sounds good to me
  
\end{abstract}
%\sm{For the title, you might want to highlight the fact that you have a nonconvex parameterization of the regularizer while being provably convergent. The fact that it is trained adversarially is not the focus/contribution of this paper. Therefore, something like `provably convergent data-driven convex-nonconvex regularization' sounds good to me.}

\section{Introduction}
\label{sec:intro}
% ***Recent work showed how to do CNC denoising, but not how to do inverse problems CNC, so we show how to do that\\
% ***Convexity is not always desirable, and in the literature we have observed a lot that non-convexity improves things \cite{star bodies and other works}.\\
% ***Learned better than regularisation\\
% ***Motivation for weak convexity: Wanna do good optimisation at the same time as having better performance. Currently no work on convergence to local minimas. \\
% ***Regularisation in data space. \todo{not sure i have many references on this?} \\
% **We have to be weakly convex with respect to the norm induced by $\A$ in order to get CNC.\\

Inverse problems appear as the main mathematical formulation of a number of scientific applications, including numerous problems in medical imaging. In inverse problems, one seeks to estimate an unknown parameter $x^* \in \X$ from a transformed and noisy measurement
\begin{equation}
    y^{\delta} = \mathcal{A}x^*+ e\in\Y.
    \label{inv_prob_data}
\end{equation}
Here, $ \mathcal{A}:\X\rightarrow \Y$ is the \emph{forward operator}, assumed to be linear and bounded, e.g. representing imaging physics, and $e\in \Y$, with $\left\|e\right\|_{\Y}\leq \delta$, describes measurement noise. However, \cref{inv_prob_data} is frequently \emph{ill-posed}, i.e. the inverse may not exist, be unique, or be continuous in the measurement $y^{\delta}$. 

Traditionally, variational approaches were used to overcome this ill-posedness by hand-crafting a \emph{regularizer} that aims to incorporate prior information about $x^*$ and they are built on top of a rigorous function-analytic foundation. A number of approaches have appeared in recent years which propose to learn a (deep)  regularizer directly from data, see e.g. \cite{elad_ksvd1,venkat_pnp_6737048,romano2017RED,red_schniter,pmlr-v97-ryu19a,gs_denoiser_hurault_2021,kamilov2023plug,chan2016plug,ar_nips,nett_paper, ulyanov2018deepImagePrior,kobler2020total,peng2019auto,uar_neurips2021} and see~\cite{data_driven_inv_prob,dimakis_2022} for overviews. These techniques are often able to achieve high-quality reconstructions, but (unlike traditional variational approaches) often lack provable properties.

%\sm{ so you might want to rewrite some sentences (concerning well-posedness, model-driven to data-driven paradigm shift, etc.) and make them more concise.}
% \Sebastian{instead of 'no convergence guarantees' should we add here 'no convergence guarantees that match the classical results for convex regularization functionals' so people cannot argue that we ignore Thm 3.1 in NETT?} for it can be given.
The methods described above typically only approach regularization in the sense of model space regularization. This work will explicitly utilize both data and model space regularization (see \cite{fomel1997model}). We will employ the technique of learning \emph{adversarial regularizers} (ARs). This was first proposed in \cite{ar_nips} and has since seen extensions including multi-step regularizers \cite{milne2022new}, latent optimized AR \cite{wang2023convex}, and in \cite{acr_arxiv} deep \emph{input convex neural networks} (ICNNs) (see \cite{amos2017input}) were used to learn an adversarial \emph{convex} regularizer (ACR), which allowed for a number of desirable theoretical guarantees to be proved. However, the imposition of convexity is quite restrictive, and it has been observed in the literature that nonconvex regularizers often have better performance; see \cite{leong2022optimal,4663911,pieper2022nonconvex,roth2009fields}.
Of particular interest is the \emph{convex-nonconvex (CNC) framework}, wherein the regularizer is kept nonconvex in a structured enough way to guarantee convexity of the overall objective, for a review of such methods see \cite{lanza2022convex}. In \cite{goujon2023learning} for example this is achieved by using a ridge regularizer and enforcing the overall function to be 1-weakly convex, turning the denoising objective convex. In a similar manner, a proximal operator corresponding to a weakly convex regularizer can be learned, as in \cite{hurault2022proximal}.

In this work, we adapt the AR method to the CNC framework. In particular, we introduce the %adversarial \emph{weakly convex} regularizers (AWCRs) by 
\emph{input weakly convex neural network} (IWCNN) construction, generalizing ICNNs, to impose weak convexity on regularizers. This %use of weak convexity 
is motivated by a desire to retain the provable guarantees and desirable optimization properties of ACRs, but also to exploit the advantages of nonconvex regularization. In this formulation we are able to show well-posedness and convergent regularization in the sense of stationary points of the regularizer, using existing results from the literature. We then use our IWCNNs to learn an \emph{adversarial convex-nonconvex regularizer} (ACNCR) and show that this is more versatile than the AR or ACR in computed tomography (CT) experiments.
\section{Background and problem formulation}
\label{sec:background}
In the function-analytic formulation of inverse problems, the unknown parameter $x^*$ is modeled as deterministic and one approximates it from $y^{\delta}$ by solving a variational reconstruction problem:
\begin{equation}
    x_{\alpha}(y^\delta)\in\underset{x \in \X}{\argmin}\text{\,\,}J_{\alpha}\left(\x;\y^\delta\right) :=\mathcal{L}_{\Y}\left(y^{\delta},\A x\right)+\alpha \R(x).
    \label{eq:var_recon}
\end{equation}
Here, $\X$ and $\Y$ are normed vector spaces, $\mathcal{L}_{\Y}:\Y \times \Y \rightarrow{\mathbb{R}^{+}}$ measures \emph{data fidelity}, and the \emph{regularizer} $\R:\X \rightarrow{\mathbb{R}}$ penalizes undesirable images. The parameter $\alpha>0$ trades off data fidelity with regularization and is chosen depending on the noise strength $\delta$. Henceforth, we will choose $\mathcal{L}_{\Y}(y_1,y_2):=\|y_1-y_2\|_{\Y}^2$ for convenience.

In practice, \Cref{eq:var_recon} is solved using iterative optimization schemes, and the quality of reconstructions depends heavily on the choice of $\R$. While convexity of $\R$ may be desirable analytically to provide efficient optimization schemes \cite{subho2023} with various guarantees, in practice reconstruction quality is much better for nonconvex regularizers, as discussed in \Cref{sec:intro}. This, however, comes at a cost: finding global minima becomes impossible in general and sometimes even finding stationary points can not be guaranteed. The \emph{convex-nonconvex (CNC) framework} addresses this problem, wherein the regularizer is kept nonconvex in a structured way to guarantee convexity of the overall objective $J_\alpha(\cdot,y)$.
% \subsection{Statistical formulation}
With this goal, we choose the regularizer as a combination of a weakly convex function over the data space and convex over the model/parameter space, i.e. $\R(\x):= \R^{cnc}(\x,\A\x)$ where $ \R^{cnc}(x,y):=\R^{c}(\x) + \R^{wc}(y)$, where $\R^{c}$ is convex and $\R^{wc}$ is weakly convex.
%This formulation in a natural way enables us to retain the stability and convergence guarantees from the classical regularization literature.  

\begin{definition}[$\rho$-weak convexity]
\label{def:wc}Let $\rho>0$. For $U$ a nonempty convex subset of $\X$, a function $f: U \rightarrow \overline{\mathbb{R}}$ is said to be \emph{$\rho$-weakly convex}, if for all $x_1, x_2 \in U$ and $ \lambda \in[0,1]$,
\begin{equation}\label{eq:weakconvex}
f\left(\lambda x_1+(1-\lambda) x_2\right) \leqslant \lambda f\left(x_1\right)+(1-\lambda) f\left(x_2\right)+\rho\lambda(1-\lambda)\left\|x_1-x_2\right\|^2.
\end{equation}
%or equivalently if $f(\cdot) + \frac{\rho}{2}\|\cdot\|^2$ is convex. 
\end{definition}
% A key property of weakly convex functions is that they admit a $C^1$-smooth minima-preserving approximation via the Moreau envelope (stationary points of which will important for convergence results in \Cref{sec:theory}):
% $$
% f_\lambda(x):=\min _z\left\{f(z)+\frac{1}{2 \lambda}\|z-x\|^2\right\}.
% $$
The class of weakly convex functions is quite broad and it includes all convex functions and all smooth functions with Lipschitz continuous gradients.
% Furthermore, the following are equivalent for a locally Lipschitz $f$\cite{drusvytsiki}:
% \begin{enumerate}
%     \item $f$ is $\rho$-weakly convex on $\mathbb{R}^n$.
%     \item The function $f+\frac{\rho}{2}\|\cdot\|^2$ is convex on $\mathbb{R}^n$.
%     \item For any $x, y \in \mathbb{R}^n$ and $v \in \partial f(x)$, we have the following inequality: \newline $f(y) \geq f(x)+\langle v, y-x\rangle-\frac{\rho}{2}\|y-x\|^2$
% \end{enumerate}
% It is clear that strongly convex $\Rightarrow$ convex $\Rightarrow$ weakly convex. 

\section{Theoretical results}\label{sec:theory}
Under the CNC parametrization of the regularizer, we now provide convergence guarantees in terms of stability, convergent regularization, and fixed point convergence \cite{subho2023}.
% It is often desirable to show that \cref{var_recon} is convergent in terms regularization, i.e., there exists a regularization parameter choice $\alpha:\delta\mapsto\alpha(\delta)$, such that as $\delta\rightarrow 0$, $x_{\alpha(\delta)}(y^\delta)\rightarrow x^{\dagger}$, where 
\begin{definition}
    Given $y^0\in \Y$, we say that $x^\dagger$ is an \emph{$\R$-minimizing solution} if \begin{equation}
    \x^{\dagger} \in \underset{x \in \X}{\argmin}\text{\,}\R(\x) \text{\,\,subject to\,\,}\A x=y^{0}.
    \label{r_min_sol}
\end{equation}
Note that an $\R$-minimizing solution can be written as  $\x^{\dagger} \in \underset{x \in \X}{\argmin}\text{\,}\R^{cnc}(x,y^0) \text{\,\,s.t.\,\,}\A x=y^{0}$ due to the constraint, implying uniqueness under e.g. strict convexity of $\R^{cnc}$ in the first argument.
\end{definition}
\begin{theorem}
\label{existence_uniqueness_prop} For $\R^{cnc}(x,y) := \R^{c}(x)+\R^{wc}(y)$ proper, lower semi-continuous, $\mu$-strongly convex in the first argument, and $\rho$-weakly convex and bounded in the second argument, we have:
\begin{enumerate}[leftmargin=*]
    \item  \textbf{Weak convexity: }For $\alpha\rho > 1$:  $J_{\alpha}(\cdot,\y)$ is $-\alpha\mu+(\alpha\rho-1)\|\A\|^2$-weakly convex; For $\alpha\rho \leq 1$: $J_{\alpha}(\cdot,\y)$ is $\alpha\mu$ - strongly convex.
    \item \textbf{Existence:} $J_{\alpha}(\x; \y)$ has a minimizer $x_{\alpha}\left( y\right)$ for every $\y$ and $\alpha>0$. Furthermore for $\alpha\rho\leq1$, $x_{\alpha}\left(\y\right)$ is unique.
    \item \textbf{Stability:} Sequences of minimizers of $J_{\alpha}(\cdot, y^\delta)$ are stable with respect to the data $y^\delta$, i.e. if $\lim _{k \rightarrow \infty} \|y^\delta-y_k\|=0$, then every sequence $x_k \in \argmin J_{\alpha}(\cdot,y_k)$ has a subsequence weakly convergent to some minimizer $x_{\alpha}\left(\y^\delta\right)$. 
% Moreover, for each convergent subsequence $\left(x_k\right), \mathcal{R}\left(x_k\right)$ converges to $\mathcal{R}(x_{\alpha}\left(\y^\delta\right))$.\newline
\end{enumerate}
Typically, finding global minima of nonconvex objectives is infeasible; we will recover stationary points instead. For  $\widetilde{\x}_{\alpha}\left( y\right)\in\{x\in\X\,|\, 0\in\partial_{x}J_{\alpha}(\x; \y)\}$, for $\partial_{x}J_{\alpha}(\x; \y)$ the Clarke subdifferential: 
\begin{enumerate}[leftmargin=*]
\setcounter{enumi}{3}
\item \textbf{Locally convergent regularization:} For $\|\y^{\delta}-\y^0\|\leq\delta$, $\delta\rightarrow 0$, $\alpha(\delta) \rightarrow 0$, and $\frac{\delta}{\alpha(\delta)}\rightarrow 0$, the reconstruction $\widetilde{x}_{\alpha}\left(\y^{\delta}\right)$ converges to the unique $\R$-minimizing solution $\x^{\dagger}$ in \cref{r_min_sol}. In words: in the vanishing noise limit, there exists a regularization parameter selection strategy under which reconstructions converge to the solution of the noiseless operator equation.
\item \textbf{Convergence of sub-gradient updates:} 
Given the subgradient descent method $x_{k+1}=x_k-\eta_k v_k$, with $v_k \in \partial J_{\alpha}(\cdot,\y)\left(x_k\right)$, if $\alpha\rho\leq 1$: there exists a choice of $\eta_k^*$ such that $\x_{k}$ converge to the minimizer $x_{\alpha}(y)$ with respect to the norm on $\X$. 
% If $\alpha\rho> 1$:
% satisfies the following for its Moreau envelope:
% $$\underset{k=0\dots T}{\operatorname{min}}\|\nabla \left[J_{\alpha}(\cdot,\y)\right]_{1/\bar{\rho}}(x_k)\|^2 \leq \frac{\left( \left[J_{\alpha}(x_0,\y)\right]_{1/\bar{\rho}}-J_{\alpha}(x_{\alpha}(y),\y)+\frac{\bar{\rho} L^2}{2} \sum_{t=0}^T \eta_k^2\right)}{\left(\bar{\rho}-(\alpha\rho-1)\|\A\|^2\right)\sum_{k=0}^T \eta_k},
% $$
% i.e. for $\eta_k = \mathcal{O}(1/\sqrt{k})$, will find a point $x$ satisfying $\|\nabla \left[J_{\alpha}(\cdot,\y)\right]_\lambda(x)\| \leq \epsilon$ using $\mathcal{O}\left(\epsilon^{-4}\right)$ subgradient steps.
\end{enumerate}
\end{theorem}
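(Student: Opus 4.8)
The plan is to reduce the final claim to the classical convergence analysis of the subgradient method for strongly convex functions, using Part~1 to supply the required curvature. Since we are in the regime $\alpha\rho\leq 1$, Part~1 tells us that $J_{\alpha}(\cdot,y)$ is $\alpha\mu$-strongly convex, and Part~2 guarantees that its minimizer exists and is unique; write $x^{*}:=x_{\alpha}(y)$. We work in the Hilbert-space setting, identifying the Clarke subgradient with an element of $\X$ so that the update $x_{k+1}=x_k-\eta_k v_k$ is well defined. Strong convexity then gives, for any $v_k\in\partial_x J_{\alpha}(\cdot,y)(x_k)$,
\[
J_{\alpha}(x^{*},y)\;\geq\; J_{\alpha}(x_k,y)+\langle v_k,\,x^{*}-x_k\rangle+\tfrac{\alpha\mu}{2}\|x^{*}-x_k\|^2,
\]
and, since $J_{\alpha}(x_k,y)\geq J_{\alpha}(x^{*},y)$, this rearranges to the key lower bound $\langle v_k,\,x_k-x^{*}\rangle\geq\tfrac{\alpha\mu}{2}\|x_k-x^{*}\|^2$.

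First I would expand the squared distance of the update to the minimizer and substitute this bound:
\[
\|x_{k+1}-x^{*}\|^2=\|x_k-x^{*}\|^2-2\eta_k\langle v_k,\,x_k-x^{*}\rangle+\eta_k^2\|v_k\|^2\;\leq\;(1-\eta_k\alpha\mu)\,\|x_k-x^{*}\|^2+\eta_k^2\|v_k\|^2.
\]
This is the standard Lyapunov recursion $a_{k+1}\leq(1-\eta_k\alpha\mu)a_k+\eta_k^2\|v_k\|^2$ for $a_k:=\|x_k-x^{*}\|^2$. Given a uniform bound $\|v_k\|\leq G$, I would take the diminishing step size $\eta_k^{*}=\tfrac{2}{\alpha\mu(k+1)}$, so that $1-\eta_k^{*}\alpha\mu=\tfrac{k-1}{k+1}$ and the recursion becomes $a_{k+1}\leq\tfrac{k-1}{k+1}a_k+\tfrac{4G^2}{(\alpha\mu)^2(k+1)^2}$; a short induction then yields $a_k\leq\tfrac{4G^2}{(\alpha\mu)^2 k}\to 0$, giving $x_k\to x^{*}$ in the norm of $\X$ at rate $O(1/\sqrt{k})$. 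This establishes exactly the claimed convergence to the (unique) minimizer.

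The main obstacle is the control of the subgradient norms $\|v_k\|$. Because $J_{\alpha}(\cdot,y)$ is the sum of the smooth quadratic $\|y-\A\,\cdot\|_{\Y}^2$ and a finite convex function, it is locally Lipschitz, so the bound $\|v_k\|\leq G$ holds uniformly on any bounded region; the difficulty is that the recursion alone does not immediately confine the iterates, since the noise term $\eta_k^2\|v_k\|^2$ can enlarge $a_k$. I would close this gap by a localization argument, showing that for the prescribed decreasing step sizes the iterates cannot escape a fixed ball around $x^{*}$, thereby making the local Lipschitz constant a legitimate global bound along the trajectory. More cleanly, since the hypotheses just verified—$\alpha\mu$-strong convexity, existence of a unique minimizer, and local Lipschitzness—are precisely those required by the strongly convex subgradient convergence results of \cite{subho2023}, one may instead invoke that theorem directly to conclude norm convergence of $x_k$ to $x_{\alpha}(y)$.
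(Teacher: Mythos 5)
You have addressed only item (5) of the five-item statement: your argument takes Part (1) (strong convexity when $\alpha\rho\le 1$) and Part (2) (existence and uniqueness of $x_{\alpha}(y)$) as given and never proves items (1)--(4), so as a proof of the theorem as stated it is incomplete. For comparison, the paper's proof of all five items is a one-line citation to \cite{poschl2009overview,acr_arxiv}; your self-contained Lyapunov recursion for item (5) is therefore a genuinely different (and more informative) route for that single item, and the recursion itself, with step size $\eta_k^*=\tfrac{2}{\alpha\mu(k+1)}$ and the induction $a_k\le\tfrac{4G^2}{(\alpha\mu)^2k}$, is the standard, correct argument \emph{provided} the bound $\|v_k\|\le G$ holds along the trajectory. (Minor point: with the paper's convention in Definition 2.1, moduli of weak/strong convexity differ by a factor of $2$ from the convention you use; this only shifts constants.)

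The genuine gap is the one you flag but do not close: uniform boundedness of the subgradients. $J_{\alpha}(\cdot,y)$ is not globally Lipschitz --- the fidelity term $\|\A x-y\|^2$ has gradient growing linearly in $\|x\|$, and $\R^{c}$, being $\mu$-strongly convex, also has unbounded subgradients; under the theorem's actual hypotheses ($\R^{c}$ merely proper, lower semi-continuous and strongly convex) they may even grow super-linearly, e.g.\ $\R^{c}(x)=e^{\|x\|^2}$. In that generality the ``localization'' you appeal to can actually fail for the fixed schedule $\eta_k^*=\tfrac{2}{\alpha\mu(k+1)}$: starting far from $x^*$, the first step $x_1=x_0-\eta_0 v_0$ overshoots, and the iterates can oscillate with exploding magnitude, so boundedness of the trajectory is \emph{not} a consequence of the recursion plus diminishing steps. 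It must either come from additional structure (bounded domain, or a Lipschitz nonconvex part as in the learned ACR/ACNCR parametrizations --- but this is not among the theorem's hypotheses), or from a step-size rule adapted to the iterates, and in either case this is precisely the missing piece of the proof. Your fallback --- invoking the strongly convex subgradient convergence result of \cite{subho2023} --- is the legitimate repair, but note that it collapses your argument into what the paper does anyway, namely deferring item (5) (and items (1)--(4), which you must still account for) to the cited literature.
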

\begin{proof}
Items (1-5) directly follow from \cite{poschl2009overview,acr_arxiv}. 
\end{proof}
\section{Adversarial convex-nonconvex regularization}
\subsection{Adversarial regularization and the ACR}
Within the adversarial regularization approach, $\mathcal{R}_\theta$ is chosen as a neural network. Assume that we have a dataset of samples $(x_i) \in \X$ and $(y_i) \in \Y$ i.i.d. from the distributions of ground truth images $\mathbb{P}_r$ and measurements $\mathbb{P}_Y$, respectively. We are in the setting of weakly supervised learning, i.e. %these are samples from the marginals of $\mathbb{P}_{\X \times \Y}$ and not the joint distribution
these are \emph{not} samples of measurement-ground truth pairs. In order to `compare' the two distributions, we map $\mathbb{P}_Y$ from the space of measurements $\Y$ to the original space $\X$ using some pseudo-inverse $\A^\dagger$. We denote the projected distribution by $\mathbb{P}_n:=(\A^{\dagger})_{\#} \mathbb{P}_Y$, where $\#$ denotes the push-forward of measures. Then $\mathbb{P}_n$ will correspond to the distribution of images with reconstruction artifacts.

%In this case $\mathbb{P}_n$ is the distribution of FBPs and in general, will correspond to the distribution of noisy images, the distribution of which will now also depend on both the noise $e$ and the operator pseudo-invrse $\A^{\dagger}$.

Now, $\mathcal{R}_\theta$ is meant to penalize artificial images and promote real images, so we want $\mathcal{R}_\theta$ to be large on $\mathbb{P}_n$ and small on $\mathbb{P}_r$. 
Therefore, \cite{ar_nips} chose the following loss functional to minimize:
\begin{equation}
    \label{eq:AR_loss}
\mathbb{E}_{X \sim \mathbb{P}_r}\left[\R_{\theta}(X)\right]-\mathbb{E}_{X \sim \mathbb{P}_n}\left[\R_{\theta}(X)\right]+\lambda \cdot \mathbb{E}\left[\left(\left\|\nabla_x \R_{\theta}(X)\right\|-1\right)_{+}^2\right].
\end{equation}
Here, the last term is used to enforce the neural network to be 1-Lipschitz with respect to the input.\footnote{This is done in similarity with the Wasserstein GAN loss (see \cite{arjovsky2017wasserstein}), with the expected value in the last term taken over all lines connecting samples in $\mathbb{P}_n$ and $\mathbb{P}_r$.}

% In particular, continuing the analogy with WGANs, a minimiser of \Cref{eq:AR_loss} approximates a maximiser $f$ to the Kantorovich formulation of optimal transport (with a relaxed Lipshitz condition):
% $$
% \operatorname{Wass}\left(\mathbb{P}_r, \mathbb{P}_n\right)=\sup _{f \in \text{1-Lip}} \mathbb{E}_{X \sim \mathbb{P}_n}[f(X)]-\mathbb{E}_{X \sim \mathbb{P}_r}[f(X)]
% $$

% Under the assumption that $\mathbb{P}_r$ is supported on a weakly compact set $\mathcal{M}$ and that $\left(P_{\mathcal{M}}\right)_{\#}\left(\mathbb{P}_n\right)=\mathbb{P}_r$, where $P_{\mathcal{M}}:D\to\mathcal{M}$ is the projection function onto the manifold, it is proved in \cite{ar_nips} that a minimizer of \cref{eq:AR_loss} is given by 
% $
% d_{\mathcal{M}}(x):=\min _{y \in \mathcal{M}}\|x-y\|
% $.
% \begin{theorem}[Weak Stability in Data Term \cite{ar_nips}] Assume that $\R_{\theta}$ is lsc and 1-Lipschitz, $\|\R_{\theta}(x)\| \rightarrow \infty$ as $\|x\| \rightarrow \infty$ and $\A$ is continuous. Let $y_n$ be a sequence in $\Y$ with $y_n \rightarrow y$ in the norm topology and denote by $x_n$ a sequence of minimisers of the functional
% $$
% \operatorname{argmin}_{x \in \X}\left\|\A x-y_n\right\|^2+\lambda \R_{\theta}(x)
% $$
% Then $x_n$ has a weakly convergent subsequence and the limit $x$ is a minimiser of $\|\mathcal{A}x-y\|^2+\lambda \R_{\theta}(x)$.
% \end{theorem}

The ACR is then defined in \cite{acr_arxiv} to be of the form $\mathcal{R}_{\theta}(x)=\mathcal{R}^\textrm{ICNN}_\theta(x)+\mu\|x\|^2$ where $\mathcal{R}^\textrm{ICNN}_\theta$ is an ICNN \cite{amos2017input}. This is again trained by minimizing \cref{eq:AR_loss}.

\subsection{Input weakly convex neural network (IWCNN)}\label{sec:IWCNN}
Based on the discussion in \Cref{sec:background,sec:theory}, we wish to construct a neural network parameterization that is nonconvex, but weakly convex, with respect to the input, to be used for adversarial regularization. A natural question to ask is whether the original AR is nonconvex and weakly convex. Unfortunately, the answer is negative due to the following result, which we prove in \Cref{app:propproof}.
\begin{theorem}\label{prop:pwlwc}
A piecewise linear continuous function $f:\mathbb{R}^d\to \mathbb{R}$ with a finite number of pieces (e.g., a \texttt{ReLU} or \texttt{leakyReLU} neural net)  is weakly convex if and only if it is convex.
\end{theorem}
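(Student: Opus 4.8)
The plan is to prove both directions, the ``if'' being immediate: a convex function satisfies \cref{eq:weakconvex} with $\rho=0$, hence is $\rho$-weakly convex for any $\rho>0$. The substance lies in the converse, and my main idea is to reduce it to one dimension. First I would observe that both hypotheses pass to restrictions along lines. For a unit vector $v\in\mathbb{R}^d$ and a base point $x_0$, the function $g(t):=f(x_0+tv)$ is again continuous and piecewise linear with finitely many pieces, since the line $\{x_0+tv\}$ meets each of the finitely many polyhedral regions on which $f$ is affine in an interval (a point, or empty set), yielding only finitely many breakpoints. Moreover, substituting $x_i=x_0+t_i v$ into \cref{eq:weakconvex} and using $\|x_1-x_2\|=|t_1-t_2|$ shows that $g$ inherits $\rho$-weak convexity. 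Since a function on $\mathbb{R}^d$ is convex if and only if its restriction to every line is convex, it suffices to establish the claim in dimension one.

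Next I would prove the one-dimensional statement: a continuous piecewise linear $g:\mathbb{R}\to\mathbb{R}$ with finitely many pieces that is $\rho$-weakly convex must be convex. A continuous piecewise linear $g$ is convex exactly when its slope is nondecreasing across breakpoints, so if $g$ were \emph{not} convex there would exist a breakpoint $x_0$ at which the left slope $s_-$ strictly exceeds the right slope $s_+$. I would then probe weak convexity at this ``concave kink'' using the symmetric pair $x_1=x_0-t$, $x_2=x_0+t$ with $\lambda=\tfrac12$, for $t>0$ small enough that $x_1,x_2$ lie in the two adjacent affine pieces. A direct computation gives $\tfrac12\bigl(g(x_1)+g(x_2)\bigr)-g(x_0)=\tfrac12(s_+-s_-)t<0$, whereas \cref{eq:weakconvex} with $\lambda=\tfrac12$ forces this quantity to be at least $-\rho\,\tfrac14\|x_1-x_2\|^2=-\rho t^2$. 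Combining and dividing by $t>0$ yields $\tfrac12(s_--s_+)\le \rho t$.

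The final step is the crux, and it is where the hypothesis of finitely many pieces is essential. Because $g$ has only finitely many pieces, the slope jump $s_--s_+>0$ is a \emph{fixed positive constant}, independent of $t$; letting $t\to 0$ in $\tfrac12(s_--s_+)\le \rho t$ gives $\tfrac12(s_--s_+)\le 0$, contradicting $s_->s_+$. Thus $g$ admits no concave kink, so it is convex; consequently every line restriction of $f$ is convex, and therefore $f$ is convex. The main obstacle is isolating this scaling argument correctly: once weak convexity is localized at a kink, the concavity gap is of order $t$ while the weak-convexity slack is only of order $t^2$, so the finitely-many-pieces assumption (which bounds the number of distinct gradients away from a continuum and makes each kink of definite magnitude) is precisely what prevents the quadratic correction from absorbing the concavity as the probing segment shrinks.
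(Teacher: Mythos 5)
Your proof is correct and follows essentially the same route as the paper's: reduce to $d=1$ by restricting to lines, then probe the weak-convexity inequality symmetrically at a breakpoint with $\lambda=\tfrac12$, so that the slope gap (order $t$) must be absorbed by the quadratic correction (order $t^2$), forcing slopes to be nondecreasing. The only difference is in the endgame: you argue by contradiction at a single concave kink and cite as standard the fact that a continuous piecewise linear function with nondecreasing slopes is convex, whereas the paper derives $a_{i+1}\ge a_i$ at every breakpoint and then proves that fact explicitly by showing $f$ equals the pointwise maximum of its affine pieces.
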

Thus, we need to construct a network which is nonconvex but guaranteed to be weakly convex. For this, we make use of the following fact (for a generalisation to Banach spaces, see \Cref{app:Banach}).
\begin{theorem}[Weak convexity of compositions \cite{davis2018subgradient}]\label{thm:wc_comp}
Let $F(x)=h(c(x))$ for $h: \mathbb{R}^m \rightarrow \mathbb{R}$ convex and $L$-Lipschitz, and $c: \mathbb{R}^d \rightarrow \mathbb{R}^m$\: a $C^1$-smooth map with $\beta$-Lipschitz gradient.
Then $F$ is $L \beta$-weakly convex. Furthermore, by the chain rule, 
$\partial F(x)=\nabla c\left(x\right)^\top \partial h\left(c\left(x\right)\right)$. 
\end{theorem}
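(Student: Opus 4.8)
The plan is to verify \Cref{def:wc} directly for $F = h\circ c$ by controlling the failure of $c$ to be affine along line segments. Fix $x_1, x_2 \in \mathbb{R}^d$ and $\lambda \in [0,1]$, write $x_\lambda := \lambda x_1 + (1-\lambda)x_2$, and set $\bar c := \lambda c(x_1) + (1-\lambda)c(x_2)$. The key quantity to estimate is the \emph{convexity gap} $\|c(x_\lambda) - \bar c\|$, i.e.\ how far $c$ deviates from linear interpolation between $c(x_1)$ and $c(x_2)$; once this is shown to be of order $\lambda(1-\lambda)\|x_1-x_2\|^2$, the convexity and Lipschitz continuity of $h$ will close the argument.

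First I would record the vector-valued descent lemma. Since $\nabla c$ is $\beta$-Lipschitz, writing $c(y)-c(x) = \int_0^1 \nabla c(x+t(y-x))(y-x)\,dt$ and subtracting $\nabla c(x)(y-x)$ gives
\begin{equation}
\left\| c(y) - c(x) - \nabla c(x)(y-x) \right\| \le \frac{\beta}{2}\|y-x\|^2.
\end{equation}
Applying this twice, once with $(x,y) = (x_\lambda, x_1)$ and once with $(x,y)=(x_\lambda,x_2)$, and forming the corresponding $\lambda$, $(1-\lambda)$ combination, the first-order terms cancel because $\lambda(x_1-x_\lambda)+(1-\lambda)(x_2-x_\lambda)=0$. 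Using $\|x_1-x_\lambda\| = (1-\lambda)\|x_1-x_2\|$ and $\|x_2-x_\lambda\|=\lambda\|x_1-x_2\|$, the two remainders collapse to
\begin{equation}
\left\| c(x_\lambda) - \bar c \right\| \le \frac{\beta}{2}\,\lambda(1-\lambda)\,\|x_1-x_2\|^2.
\end{equation}

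With the gap bound in hand, I would combine the $L$-Lipschitz estimate $h(c(x_\lambda)) \le h(\bar c) + L\|c(x_\lambda)-\bar c\|$ with Jensen's inequality $h(\bar c) \le \lambda h(c(x_1)) + (1-\lambda)h(c(x_2))$, valid since $h$ is convex. Substituting the gap bound yields $F(x_\lambda) \le \lambda F(x_1) + (1-\lambda)F(x_2) + \tfrac{L\beta}{2}\lambda(1-\lambda)\|x_1-x_2\|^2$, which is exactly \cref{eq:weakconvex} with $\rho = L\beta/2$; in particular $F$ is $L\beta$-weakly convex, the argument merely improving the stated constant. Finally, the subgradient formula $\partial F(x)=\nabla c(x)^\top\partial h(c(x))$ follows from the Clarke chain rule for compositions: because $c$ is $C^1$ it is strictly differentiable, and $h$ is convex hence locally Lipschitz and Clarke-regular, so the chain rule holds with equality and $\partial h$ coincides with the convex subdifferential.

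The main obstacle I anticipate is the vector-valued gap bound: care is needed to treat $c$ as a map into $\mathbb{R}^m$ (the descent lemma must be applied in the matching operator and vector norms) and to verify the exact cancellation of the first-order terms, since that cancellation is what produces the clean $\lambda(1-\lambda)$ factor aligning with \cref{eq:weakconvex}. The remaining steps are routine applications of convexity, Lipschitz continuity, and standard nonsmooth calculus.
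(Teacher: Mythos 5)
Your proof is correct, but it takes a genuinely different route from the paper, for the simple reason that the paper contains no proof of \Cref{thm:wc_comp} at all: the result is imported verbatim from \cite{davis2018subgradient}, with the Banach-space generalization in \Cref{app:Banach} likewise cited from \cite{thibault2021unilateral} without proof. Your argument is therefore a self-contained alternative. Its structure—bounding the interpolation gap $\|c(x_\lambda)-\bar c\|\le \frac{\beta}{2}\lambda(1-\lambda)\|x_1-x_2\|^2$ via the vector-valued descent lemma (with the first-order terms cancelling exactly because $\lambda(x_1-x_\lambda)+(1-\lambda)(x_2-x_\lambda)=0$), then closing with Lipschitzness and Jensen's inequality for $h$—verifies the secant inequality \cref{eq:weakconvex} directly, whereas the standard proof behind the citation works with the subgradient inequality instead, showing $F(y)\ge F(x)+\langle \nabla c(x)^\top v,\,y-x\rangle-\frac{L\beta}{2}\|y-x\|^2$ for $v\in\partial h(c(x))$ and reading off weak convexity from that lower quadratic bound. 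Your computation also quietly reconciles a constant-convention mismatch: \cite{davis2018subgradient} call $F$ $\rho$-weakly convex when $F+\frac{\rho}{2}\|\cdot\|^2$ is convex, while \Cref{def:wc} has no factor of $\frac12$, so your $\rho=L\beta/2$ in the paper's normalization is the sharp translation of their statement and in particular implies the claimed $L\beta$-weak convexity. The only part you handle by citation rather than computation—the equality $\partial F(x)=\nabla c(x)^\top\partial h(c(x))$—is justified exactly on the grounds you give (strict differentiability of $c$ plus Clarke regularity of the convex function $h$), and the paper itself asserts this step with no more detail.
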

\begin{definition}[IWCNN]\label{def:IWCNN}
By \Cref{thm:wc_comp}, we can therefore define an IWCNN by 
\[
f^\textrm{IWCNN}_\theta=g^\textrm{ICNN}_{\theta_1}\circ g^\textrm{sm}_{\theta_2},\] where $g^\textrm{ICNN}_{\theta_1}$ is an ICNN and $g^\textrm{sm}_{\theta_2}$ is a neural network with smooth activations. %This definition is visualized in \Cref{fig:IINN_WICNN_arcs}.
\end{definition}
% \begin{figure} % I dont think there is enough space for this
% % \begin{center}
% \centering\includegraphics[width=.5\linewidth]{}
% % \end{center}
% \caption{{Schematic architecture of the IWCNN.}}\label{fig:IINN_WICNN_arcs}
% \end{figure}
We note that the IWCNN construction is only necessary because of non-smooth activation functions (like \texttt{ReLU} or \texttt{leakyReLU}), which are used in both the AR and ACR. A neural network with all smooth activation functions, e.g. \texttt{sigmoid}, is automatically weakly convex. However, usage of smooth non-linearities has been shown to worsen performance in machine learning models \cite{krizhevsky2017imagenet} and furthermore using smooth activations in the original AR formulation turns out to harm performance.

\subsection{Adversarial convex-nonconvex regularizer (ACNCR)}
With the IWCNN in hand, we can now parametrize the learned regularizer as $\R_\theta^{cnc}$ defined as $\R_\theta^{{cnc}}(\x,\y) := \R_{\theta_1}^{c}(\x) + \R_{\theta_2}^{wc}(\y)$, for $\R_{\theta_1}^{c}$ parameterized in the same way as the ACR, while $\R_{\theta_2}^{wc}$ is parameterized using an IWCNN. Thus, we can view this approach as learning to denoise in both the data and observation domain. Note that this ACNCR is strictly more expressive than the ACR.
Thus, denoting $\mathbb{P}_{Y_r} :=\left(\A\right)_{\#} \mathbb{P}_r$, we train both of the networks in a decoupled way by minimizing:
\begin{equation*}
    \begin{aligned}\label{eq:ACNCR_loss}
\mathbb{E}_{X \sim \mathbb{P}_r}\left[\R_{\theta_1}^{c}(X)\right] &- \mathbb{E}_{X \sim \mathbb{P}_n}\left[\R_{\theta_1}^{c}(X)\right] &+\lambda \cdot \mathbb{E}\left[\left(\left\|\nabla_x  \R_{\theta_1}^{c}(X)\right\|-1\right)_{+}^2\right]&&+ \\
\mathbb{E}_{Y \sim \mathbb{P}_{Y_r}}\left[\R_{\theta_2}^{wc}(Y)\right] &- \mathbb{E}_{Y \sim \mathbb{P}_Y}\left[\R_{\theta_2}^{wc}(Y)\right] &+\lambda \cdot \mathbb{E}\left[\left(\left\|\nabla_y \R_{\theta_2}^{wc}(Y)\right\|-1\right)_{+}^2\right]&.&
\end{aligned}
\end{equation*}
In choosing this objective, we normalized the operator $\A$ to have norm 1. This is also done in experiments to further aid stable training of the networks. 
\section{Computed Tomography (CT) numerical experiments}\label{sec:numerics}
To evaluate our ACNCR, we consider two applications: CT reconstruction with (i) sparse-view and (ii) limited-angle projection. For details on the experimental set-up, see \Cref{app:expsetup}.
Main results are shown in \Cref{sample-table}, with further visual examples in \Cref{app:visual}.

\paragraph{Sparse view CT}As in \cite{ar_nips} performance of AR during reconstruction deteriorates if the network is over-trained and early stopping is not employed. For ACR this does not occur due to reduced expressivity, at a price of reduced performance as seen on \Cref{sample-table}. Akin to ACR, ACNCR overcomes this limitation and performs on par with AR, without over-training and without early stopping thanks to better expressivity. %See \Cref{fig:image_figure} for a visualization of the sparse CT reconstructions. 

\paragraph{Limited view CT} Reconstruction from limited-angle projection data, with no measurement in a specific angular region, is an inverse problem with a severely ill-posed forward operator where the reconstruction performance depends critically on the image prior. One of the main benefits of imposing convexity on the regularizer is the improved performance in the limited-angle setting as compared to AR, wherein, even with early stopping, artifacts arise in the reconstructions. ACNCR overcomes this issue without having to employ early stopping, while also performing on par with ACR and outperforming both model-based approaches and AR as seen from \Cref{sample-table}. 
% parallel to the missing wedge, as the FBP reconstructions are particularly blurry in those regions.

\section{Conclusion}
In this work, we have shown that a CNC regularizer defined as the sum of a weakly convex function over the data space and a convex function over the model/parameter space exhibits existence of solutions, stability, convergent regularization, and convergence of subgradient descent. Furthermore, through our novel IWCNN construction, we have shown how to learn such a regularizer adversarially. In CT experiments, this  ACNCR is observed to better adapt to both sparse and limited-angle settings, showing that it is more versatile with respect to ill-posedness than the AR and ACR approaches.
\newpage
\appendix

\section{Visualization of experimental results}
\label{app:visual}
\begin{table}[h]
  \caption{Average PSNR and SSIM over test data in CT experiments.}
  \label{sample-table}
  \centering
  \begin{tabular}{l c c r c c r}
    \toprule
    & \multicolumn{3}{c}{Limited} & \multicolumn{3}{c}{Sparse}                   \\
    \cmidrule(r){2-4} \cmidrule(r){5-7}
    Methods  & PSNR (dB)       & SSIM        & \# param. & PSNR (dB)       & SSIM        & \# param.\\
    \midrule
    FBP & 17.1949  & 0.1852 & 1 & 21.0157  & 0.1877 & 1 \\
    TV & \textbf{25.6778}  & \textbf{0.7934} & 1 & \textbf{31.7619}  & \textbf{0.8883} & 1 \\
    \midrule
    LPD &  28.9480 & \textbf{0.8394} & 127\,370 & \textbf{37.4868} & 0.9217 & 700\,180 \\
    FBP + U-Net &  \textbf{29.1103} & 0.8067 & 14\,787\,777 &  37.1075 &  \textbf{0.9265} & 14\,787\,777 \\
    \midrule
    AR & 23.6475 & 0.6257 & 133\,792 & 36.4079  & \textbf{0.9101} & 33\,952\,481 \\
    ACR &  26.4459 & \textbf{0.8184} & 34\,897 & 34.5844 & 0.8765 & 9\,448 \\
    ACNCR & \textbf{26.5420} & 0.8161 & 1\,085\,448 & 35.6476  &  0.9094 & 1\,085\,448 \\
    % AWCR & 24.6802 & 0.5706 & 1\,761\,232 & \textbf{36.7429}  &  0.9045 & 1\,761\,232\\
    \bottomrule
  \end{tabular}
\end{table}
\begin{figure*}[h]
	\centering
	\begin{subfigure}[b]{1.35in}
    \caption{Ground Truth}
    \includegraphics[width=1.35in]{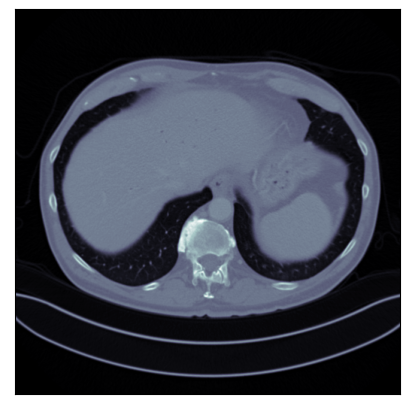}
    \caption*{\tiny{PSNR:$\infty$, SSIM:1.0000}}
    \end{subfigure}
    \begin{subfigure}[b]{1.35in}
    \caption{FBP}
    \includegraphics[width=1.35in]{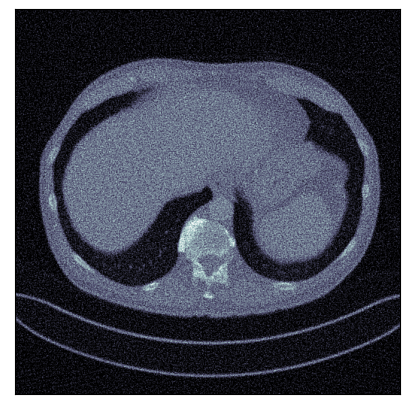}
    \caption*{\tiny{PSNR:20.9888, SSIM:0.2131}}
    \end{subfigure}%
	% \subfigure[\tiny{FBP: 21.6122 dB, 0.1696}]{
	% 	\includegraphics[height=1.8in]{figures/limited_angle/FBP.png}}
    \begin{subfigure}[b]{1.35in}
    \caption{TV}
    \includegraphics[width=1.35in]{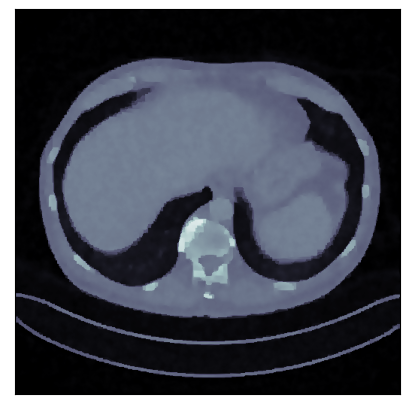}
    \caption*{\tiny{PSNR:29.9556, SSIM:0.8733}}
    \end{subfigure}%
	% \subfigure[\tiny{LPD: 29.5083 dB, 0.8466}]{
	% 	\includegraphics[width=1.8in]{figures/limited_angle/LPD.png}} 
    \begin{subfigure}[b]{1.35in}
    \caption{AR}
    \includegraphics[width=1.35in]{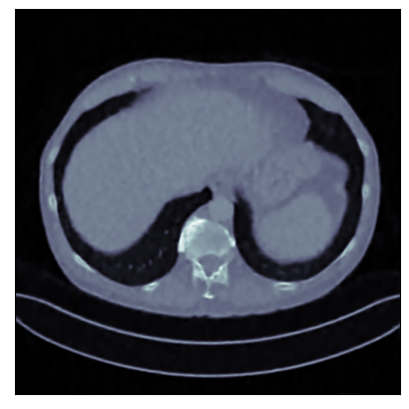}
    \caption*{\tiny{PSNR:35.8974, SSIM:0.9125}}
    \end{subfigure}%
    
	% \subfigure[\tiny{ACR: 27.9763 dB, 0.8428}]{
	% 	\includegraphics[width=1.8in]{figures/limited_angle/CLAR0.png}}
    \begin{subfigure}[b]{1.35in}
    \caption{ACNCR}
    \includegraphics[width=1.35in]{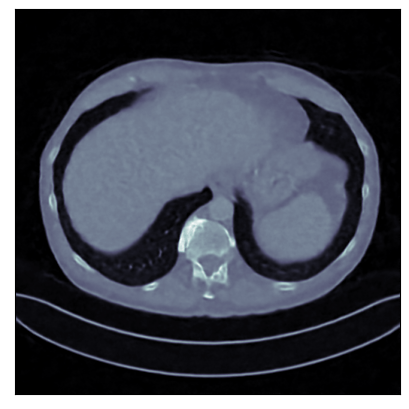}
    \caption*{\tiny{PSNR:36.9551 dB, SSIM:0.9155}}
    \end{subfigure}%
    \begin{subfigure}[b]{1.35in}
    \caption{ACR}
    \includegraphics[width=1.35in]{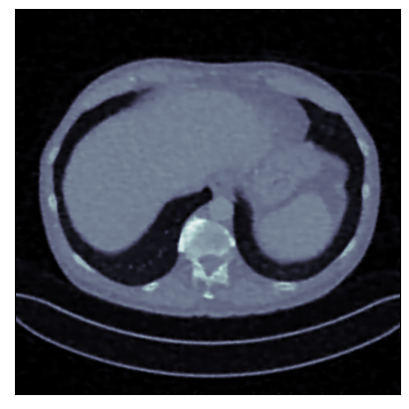}
    \caption*{\tiny{PSNR:34.9924, SSIM:0.8912}}
    \end{subfigure}%
    \begin{subfigure}[b]{1.35in}
    \caption{LPD}
    \includegraphics[width=1.35in]{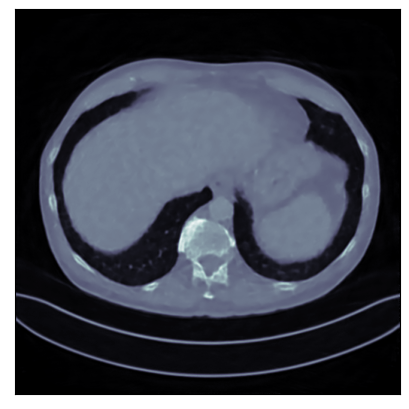}
    \caption*{\tiny{PSNR:36.4687, SSIM:0.9172}}
    \end{subfigure}%
    \begin{subfigure}[b]{1.35in}
    \caption{U-NET}
    \includegraphics[width=1.35in]{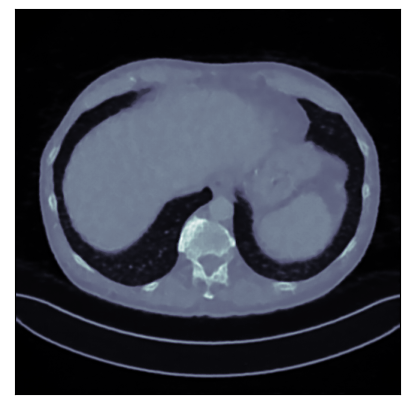}
    \caption*{\tiny{PSNR:36.5235, SSIM:0.9269}}
    \end{subfigure}%
    % \begin{subfigure}[b]{1.8in}
    % \includegraphics[width=1.8in]{}
    % \caption{\tiny{AR\footnotemark: PSNR = 36.0053 dB, SSIM = 0.8778}}
    % \end{subfigure}%
 	\caption{\small{Reconstructed images obtained using different methods, along with the associated PSNR and SSIM, for sparse view CT. In this case, the ACNCR is shown to outperform the AR in terms of SSIM and PSNR.}}
	\label{fig:image_figure}
\end{figure*}

\section{Experimental set-up}\label{app:expsetup}
For experiments, human abdominal CT scans for 10 patients in the Mayo Clinic low-dose CT grand challenge dataset \cite{mayo_ct_challenge} are used. We simulate the projection data in ODL \cite{odl}.  Our training dataset for the CT experiments consists of 2250 2D slices of size $512 \times 512$ corresponding to nine patients, and the slices extracted from the remaining one patient are used for evaluation. 

Our ACNCR method is compared with two model-based techniques: filtered back-projection (FBP) and total variation (TV) regularization; two supervised data-driven methods: the learned primal-dual (LPD) method \cite{lpd_tmi} and UNet-based post-processing of FBP \cite{postprocessing_cnn}; and two adversarial regularization approaches: the AR and the ACR. In comparing with these we illustrate the trade-off in levels of constraints versus stability and performance. The peak signal-to-noise ratio (PSNR) and structural similarity index (SSIM) \cite{ssim_paper_2004} are used as quality metrics. For fairness, we report the highest PSNR achieved by all methods during reconstruction. Results are displayed in \Cref{app:visual}.

LPD is trained on pairs of target images and projection data, whereas the U-net post-processor is trained on pairs of true images and the corresponding FBP. AR, ACR and AWCR, in contrast, require ground-truth and FBP images drawn from their marginal distributions (and hence not necessarily paired). The hyperparameters $\lambda$ and $\rho_0$ are chosen in the same as done in \cite{acr_arxiv}.

For both CT experiments, projection data is simulated using a parallel-beam acquisition geometry with 350 angles and 700 rays/angle, using additive Gaussian noise with $\sigma=3.2$. The pseudoinverse reconstruction is taken to be images obtained using FBP. For limited angle experiments, data is simulated with a missing angular wedge of 60$^{\circ}$. The native \texttt{odl} power method is used to approximate the norm of the operator. For all three adversarial regularisation methods a fixed number of steps of accelerated gradient descent is performed and for consistency best PSNR value images are reported in \Cref{sample-table}.

For the IWCNN \Cref{sec:IWCNN} architecture, the usual ICNN \cite{amos2017input} architecture using \texttt{leakyReLU} activations is used for $g^\textrm{ICNN}_{\theta_1}$, while the smooth network $g^\textrm{sm}_{\theta_2}$ is parameterised a deep convolutional network with \texttt{SiLU} activations and 5 layers. The \textit{RMSprop} optimizer (following \cite{ar_nips}) with a learning rate of $1\times 10^{-4}$ is used for training.

% \paragraph{Sparse view CT} As described in \cite{ar_nips}, the performance of AR during reconstruction tends to deteriorate if the network is over-trained and early stopping is not employed, while for ACR this does not occur due to reduced expressivity. This comes at a price, however, of reduced performance as seen on \Cref{sample-table}. ACNCR in this case is able to overcome this limitation and perform on par with AR, without over-training and without having to employ early-stopping thanks to better expressivity. See \Cref{fig:image_figure} for a visualization of the sparse CT reconstructions. 

% \paragraph{Limited view CT} Reconstruction from limited-angle projection data, with no measurement in a specific angular region, is an inverse problem with a severely ill-posed forward operator where the reconstruction performance depends critically on the image prior. In our experiments, data is simulated with a missing angular wedge of 60$^{\circ}$. 

\section{Proof of \Cref{prop:pwlwc}}\label{app:propproof}
\begin{proof}
    If $f$ is convex, then by definition it is weakly convex. 
    
    Suppose that $f$ is weakly convex. As $f$ is convex if and only if $f|_\ell$ is convex for all lines $\ell\subseteq\mathbb{R}^d$, and if $f$ is weakly convex then so is $f|_\ell$ for all lines $\ell\subseteq\mathbb{R}^d$, it suffices to prove the theorem for $d = 1$. 

    Since $f:\mathbb{R}\to\mathbb{R}$ is piecewise linear with finitely many pieces, there exist $(a_i)_{i=1}^{n+1}$, $(b_i)_{i=1}^{n+1}$, and $(x_i)_{i=1}^n$ such that the $x_i$ are monotonically increasing and, defining $x_0=-\infty$ and $x_{n+1}=\infty$,
    \[
    f(x) = \begin{cases}
        a_i x + b_i, & x \in (x_{i-1},x_i], \: i \in \{1,2,...,n+1\}  .
    \end{cases}
    \]
    Furthermore, by continuity, for all $i\in  \{1,...,n\}$, $a_ix_i+b_i = a_{i+1}x_i + b_{i+1}$. 

    Since $f$ is weakly convex, there exists $\rho > 0$ such that, by \cref{eq:weakconvex}, for all $\varepsilon > 0$ and $i\in  \{1,...,n\}$,
    \[
    f(x_i) = f\left(\frac12(x_i-\varepsilon) + \frac12(x_i+\varepsilon) \right) \leq \frac12 f(x_i - \varepsilon) + \frac12 f(x_i + \varepsilon) + \frac12 \times \frac12 \rho (2\varepsilon)^2.
    \]
    For $\varepsilon < \min\{x_i-x_{i-1}, x_{i+1}-x_i\}$, this becomes
    \[
    a_ix_i + b_i  \leq \frac12 (a_i(x_i - \varepsilon) + b_i )+ \frac12 (a_{i+1}(x_i + \varepsilon) + b_{i+1} ) + \rho \varepsilon^2
    \]
    which simplifies to (using that $a_ix_i+b_i = a_{i+1}x_i + b_{i+1}$)
     \begin{align*}
    0 \leq \frac12 \varepsilon(a_{i+1} -a_i ) + \rho \varepsilon^2, &&
    \text{and therefore} && 
    a_{i+1} -a_i \geq -2 \rho \varepsilon,
    \end{align*}
    for all sufficiently small $\varepsilon > 0$. Hence, $a_{i+1} \geq a_i$ for all $i\in \{1,..,n\}$. 
    
    We make the following claim: 
    \begin{equation}\label{eq:fmaxclaim}
        f(x) = \max_{j \in \{1,...,n+1\}} a_j x + b_j =: f_j(x).
    \end{equation}
     To prove \cref{eq:fmaxclaim}, let $x \in (x_{i-1},x_i]$, and hence $f(x)=f_i(x)$. Note that for all $j$, $f_j(x_j) = f_{j+1}(x_j)$ and that for all $j$ and $y$, $f'_j(y) = a_j \leq a_{j+1} =  f'_{j+1}(y)$. Hence: 
    \begin{itemize}[leftmargin=*]
        \item For all $j$ and $y\geq x_{j-1}$, $f_{j}(y) \geq f_{j-1}(y)$.
        \item For all $j$ and $y\leq x_j$, $f_j(y) \geq f_{j+1}(y) $.
    \end{itemize}
    It follows that: 
\begin{itemize}[leftmargin=*]
        \item For all $j < i$ and $y\geq x_{i-1}$, $f_{i}(y) \geq f_j(y)$.
        \item For all $j > i$ and  $y\leq x_{i}$, $f_i(y) \geq f_{j}(y) $.
    \end{itemize}
    Since $y = x$ satisfies both conditions, we have that for all $j \neq i$, $f_i(x) \geq f_j(x)$, as desired. Finally, from \cref{eq:fmaxclaim} it immediately follows that $f$ is convex, as it is the pointwise maximum of affine (and therefore convex) functions.
\end{proof}

\section{Banach composition weak convexity}
\label{app:Banach}
\begin{theorem}[Special case of  \cite{thibault2021unilateral} Proposition 10.21(c)]Let $U$ open and $V$ be nonempty convex sets of two normed spaces $X$ and $Y$ respectively, let $f: V \rightarrow \bar{\mathbb{R}}$ be finite, convex and $K_f$-Lipschitz continuous on $V$.
Let $F$ : $U \rightarrow Y$ be a differentiable mapping with $F(U) \subset V$ and such that $D F$ is uniformly continuous on $U$ with a linear modulus of continuity (i.e. Lipschitz continuous ) with coefficient $K_F$ on $U$%$\|D F(x)\| \leq K_F$
, then $f \circ F$ is $K_f K_F$-weakly convex.
\end{theorem}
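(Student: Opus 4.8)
The plan is to verify the weak-convexity inequality of \Cref{def:wc} for $g := f\circ F$ directly, with $\rho = K_f K_F$, by isolating two independent estimates: a convexity-plus-Lipschitz bound for $f$, and a quantitative ``second difference'' bound for $F$. Fix $x_0,x_1\in U$ with the segment $[x_0,x_1]\subset U$ (automatic for nearby pairs since $U$ is open, and for all pairs when $U$ is convex), let $\lambda\in[0,1]$, and write $x_\lambda := (1-\lambda)x_0+\lambda x_1$ and $\bar w := (1-\lambda)F(x_0)+\lambda F(x_1)$. Since $V$ is convex and $F(U)\subset V$, the point $\bar w$ lies in $V$, so $f(\bar w)$ is defined.

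First I would handle the $f$ side without ever invoking subgradients, which sidesteps any need for $f$ to be differentiable or for $F(x_\lambda)$ to lie in the interior of $V$. By convexity of $f$, $f(\bar w)\le (1-\lambda)f(F(x_0))+\lambda f(F(x_1)) = (1-\lambda)g(x_0)+\lambda g(x_1)$, and by $K_f$-Lipschitz continuity, $g(x_\lambda)=f(F(x_\lambda))\le f(\bar w)+K_f\|F(x_\lambda)-\bar w\|$. Combining gives $g(x_\lambda)\le (1-\lambda)g(x_0)+\lambda g(x_1)+K_f\|E\|$, where $E := \bar w - F(x_\lambda)$ measures the failure of $F$ to be affine along the segment.

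The crux is the second estimate: $\|E\|\le \tfrac{K_F}{2}\,\lambda(1-\lambda)\|x_0-x_1\|^2$. Writing $d:=x_1-x_0$, since $DF$ is (Lipschitz, hence) continuous, the fundamental theorem of calculus for Fréchet-differentiable Banach-space-valued maps gives $F(x_\lambda)-F(x_0)=\int_0^\lambda DF(x_0+sd)\,d\,ds$ and $F(x_1)-F(x_\lambda)=\int_\lambda^1 DF(x_0+sd)\,d\,ds$, whence $E=-(1-\lambda)\int_0^\lambda DF(x_0+sd)\,d\,ds+\lambda\int_\lambda^1 DF(x_0+sd)\,d\,ds$. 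The key trick is that the scalar weights satisfy $-(1-\lambda)\lambda+\lambda(1-\lambda)=0$, so I may subtract the constant operator $DF(x_\lambda)$ inside both integrals at no cost; bounding the resulting integrand by $\|(DF(x_0+sd)-DF(x_\lambda))d\|\le K_F|s-\lambda|\,\|d\|^2$ and computing $\int_0^\lambda(\lambda-s)\,ds=\tfrac{\lambda^2}{2}$, $\int_\lambda^1(s-\lambda)\,ds=\tfrac{(1-\lambda)^2}{2}$ yields $\|E\|\le K_F\|d\|^2\big[(1-\lambda)\tfrac{\lambda^2}{2}+\lambda\tfrac{(1-\lambda)^2}{2}\big]=\tfrac{K_F}{2}\lambda(1-\lambda)\|d\|^2$.

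Substituting this into the first bound gives $g(x_\lambda)\le(1-\lambda)g(x_0)+\lambda g(x_1)+\tfrac{K_fK_F}{2}\lambda(1-\lambda)\|x_0-x_1\|^2$, so $g$ is in fact $\tfrac{K_fK_F}{2}$-weakly convex and \emph{a fortiori} $K_fK_F$-weakly convex, since enlarging $\rho$ only weakens \cref{eq:weakconvex}. I expect the main obstacle to be the Banach second-difference estimate: one must justify the integral form of the fundamental theorem of calculus for a merely Fréchet-differentiable map into a normed space (legitimate here because $s\mapsto DF(x_0+sd)d$ is continuous, so the Riemann/Bochner integral exists) and then exploit the exact cancellation of the affine part via the reference derivative $DF(x_\lambda)$; the remaining ingredients—convexity and the two Lipschitz bounds—are then routine. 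A minor point to state carefully is the domain: since $U$ is open rather than convex, the inequality is asserted along segments contained in $U$, which is exactly what \Cref{def:wc} requires on convex subsets of $U$.
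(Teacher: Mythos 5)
Your proof is correct, and it takes a genuinely different route from the paper: the paper gives no argument for this theorem at all, deferring entirely to the cited Proposition 10.21(c) of Thibault, of which the statement is presented as a special case. Your decomposition --- convexity plus $K_f$-Lipschitz continuity of $f$ on one side, and the second-difference bound $\|(1-\lambda)F(x_0)+\lambda F(x_1)-F(x_\lambda)\|\le\tfrac{K_F}{2}\lambda(1-\lambda)\|x_0-x_1\|^2$ for $x_\lambda:=(1-\lambda)x_0+\lambda x_1$ on the other --- is sound, and the cancellation obtained by subtracting the constant operator $DF(x_\lambda)$ inside the two integrals is exactly what produces the $\lambda(1-\lambda)$ factor. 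Note that you in fact prove the sharper constant $\tfrac{K_fK_F}{2}$ under the convention of \Cref{def:wc}, which puts $\rho$ rather than $\rho/2$ in front of $\lambda(1-\lambda)\|x_1-x_2\|^2$ in \cref{eq:weakconvex}; this implies the stated $K_fK_F$-weak convexity, so there is no inconsistency. The one technical point to button up is the fundamental theorem of calculus when the target space $Y$ is not complete: the Riemann integral of the continuous map $s\mapsto DF(x_0+s(x_1-x_0))(x_1-x_0)$ exists a priori only in the completion $\widehat{Y}$, but since the increments of $F$ lie in $Y$ the identity may be read there; alternatively, one can avoid integration entirely by composing with norming functionals (Hahn--Banach) and applying the scalar mean value theorem. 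Either fix is routine, and you flagged the issue yourself. In comparison, the paper's citation buys brevity and the full generality of Thibault's framework (where no differentiability of $F$ beyond the stated hypotheses is reorganized or weakened); your argument buys a self-contained elementary proof, an explicit and slightly better constant, and a careful treatment of the fact that $U$ is merely open, with weak convexity asserted along segments contained in $U$, i.e. on convex subsets, which is precisely what \Cref{def:wc} requires.
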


%%%%%%%%%%%%%%%%%%%%%%%%%%%%%%%%%%%%%%%%%%%%%%%%%%%%%%%%%%%%
\bibliographystyle{plainnat}  
 \bibliography{refs}

\begin{thebibliography}{39}
\providecommand{\natexlab}[1]{#1}
\providecommand{\url}[1]{\texttt{#1}}
\expandafter\ifx\csname urlstyle\endcsname\relax
  \providecommand{\doi}[1]{doi: #1}\else
  \providecommand{\doi}{doi: \begingroup \urlstyle{rm}\Url}\fi

\bibitem[Adler et~al.(2017)Adler, Kohr, and \"Oktem]{odl}
J.~Adler, H.~Kohr, and Ozan \"Oktem.
\newblock {Operator discretization library (ODL)}.
\newblock \emph{GitHub repository}, 2017.
\newblock URL \url{{https://github.com/odlgroup/odl}}.

\bibitem[Adler and {\"O}ktem(2018)]{lpd_tmi}
Jonas Adler and Ozan {\"O}ktem.
\newblock Learned primal-dual reconstruction.
\newblock \emph{IEEE transactions on medical imaging}, 37\penalty0 (6):\penalty0 1322--1332, 2018.

\bibitem[Aharon et~al.(2006)Aharon, Elad, and Bruckstein]{elad_ksvd1}
M.~Aharon, M.~Elad, and A.~Bruckstein.
\newblock {K-SVD: An algorithm for designing overcomplete dictionaries for sparse representation}.
\newblock \emph{IEEE Transactions on Signal Processing}, 54\penalty0 (11):\penalty0 4311--4322, 2006.

\bibitem[Amos et~al.(2017)Amos, Xu, and Kolter]{amos2017input}
Brandon Amos, Lei Xu, and J~Zico Kolter.
\newblock Input convex neural networks.
\newblock In \emph{International Conference on Machine Learning}, pages 146--155, 2017.

\bibitem[Arjovsky et~al.(2017)Arjovsky, Chintala, and Bottou]{arjovsky2017wasserstein}
Martin Arjovsky, Soumith Chintala, and L{\'e}on Bottou.
\newblock {W}asserstein generative adversarial networks.
\newblock In Doina Precup and Yee~Whye Teh, editors, \emph{Proceedings of the 34th International Conference on Machine Learning}, volume~70 of \emph{Proceedings of Machine Learning Research}, pages 214--223. PMLR, 06--11 Aug 2017.
\newblock URL \url{https://proceedings.mlr.press/v70/arjovsky17a.html}.

\bibitem[Arridge et~al.(2019)Arridge, Maass, \"Oktem, and Sch\"onlieb]{data_driven_inv_prob}
Simon Arridge, Peter Maass, Ozan \"Oktem, and Carola-Bibiane Sch\"onlieb.
\newblock Solving inverse problems using data-driven models.
\newblock \emph{Acta Numerica}, 28:\penalty0 1--174, 2019.

\bibitem[Chan et~al.(2016)Chan, Wang, and Elgendy]{chan2016plug}
Stanley~H Chan, Xiran Wang, and Omar~A Elgendy.
\newblock {Plug-and-play ADMM for image restoration: Fixed-point convergence and applications}.
\newblock \emph{IEEE Transactions on Computational Imaging}, 3\penalty0 (1):\penalty0 84--98, 2016.

\bibitem[Davis et~al.(2018)Davis, Drusvyatskiy, MacPhee, and Paquette]{davis2018subgradient}
Damek Davis, Dmitriy Drusvyatskiy, Kellie~J. MacPhee, and Courtney Paquette.
\newblock Subgradient methods for sharp weakly convex functions.
\newblock \emph{Journal of Optimization Theory and Applications}, 179\penalty0 (3):\penalty0 962--982, Dec 2018.
\newblock ISSN 1573-2878.
\newblock \doi{10.1007/s10957-018-1372-8}.

\bibitem[Dimakis(2022)]{dimakis_2022}
Alexandros~G. Dimakis.
\newblock \emph{Deep Generative Models and Inverse Problems}, page 400–421.
\newblock Cambridge University Press, 2022.
\newblock \doi{10.1017/9781009025096.010}.

\bibitem[Fomel(1997)]{fomel1997model}
Sergey Fomel.
\newblock On model-space and data-space regularization: A tutorial.
\newblock \emph{SEP-94: Stanford Exploration Project}, pages 141--164, 1997.

\bibitem[Goujon et~al.(2023)Goujon, Neumayer, and Unser]{goujon2023learning}
Alexis Goujon, Sebastian Neumayer, and Michael Unser.
\newblock Learning weakly convex regularizers for convergent image-reconstruction algorithms, 2023.

\bibitem[Hurault et~al.(2022{\natexlab{a}})Hurault, Leclaire, and Papadakis]{gs_denoiser_hurault_2021}
Samuel Hurault, Arthur Leclaire, and Nicolas Papadakis.
\newblock Gradient step denoiser for convergent plug-and-play.
\newblock In \emph{International Conference on Learning Representations}, 2022{\natexlab{a}}.

\bibitem[Hurault et~al.(2022{\natexlab{b}})Hurault, Leclaire, and Papadakis]{hurault2022proximal}
Samuel Hurault, Arthur Leclaire, and Nicolas Papadakis.
\newblock Proximal denoiser for convergent plug-and-play optimization with nonconvex regularization.
\newblock In Kamalika Chaudhuri, Stefanie Jegelka, Le~Song, Csaba Szepesvari, Gang Niu, and Sivan Sabato, editors, \emph{Proceedings of the 39th International Conference on Machine Learning}, volume 162 of \emph{Proceedings of Machine Learning Research}, pages 9483--9505. PMLR, 17--23 Jul 2022{\natexlab{b}}.
\newblock URL \url{https://proceedings.mlr.press/v162/hurault22a.html}.

\bibitem[{Jin} et~al.(2017){Jin}, {McCann}, {Froustey}, and {Unser}]{postprocessing_cnn}
K.~H. {Jin}, M.~T. {McCann}, E.~{Froustey}, and M.~{Unser}.
\newblock Deep convolutional neural network for inverse problems in imaging.
\newblock \emph{IEEE Transactions on Image Processing}, 26\penalty0 (9):\penalty0 4509--4522, 2017.

\bibitem[Kamilov et~al.(2023)Kamilov, Bouman, Buzzard, and Wohlberg]{kamilov2023plug}
Ulugbek~S Kamilov, Charles~A Bouman, Gregery~T Buzzard, and Brendt Wohlberg.
\newblock Plug-and-play methods for integrating physical and learned models in computational imaging: Theory, algorithms, and applications.
\newblock \emph{IEEE Signal Processing Magazine}, 40\penalty0 (1):\penalty0 85--97, 2023.

\bibitem[Kobler et~al.(2020)Kobler, Effland, Kunisch, and Pock]{kobler2020total}
Erich Kobler, Alexander Effland, Karl Kunisch, and Thomas Pock.
\newblock Total deep variation for linear inverse problems.
\newblock In \emph{Proceedings of the IEEE Conference on Computer Vision and Pattern Recognition}, pages 7549--7558, 2020.

\bibitem[Krizhevsky et~al.(2017)Krizhevsky, Sutskever, and Hinton]{krizhevsky2017imagenet}
Alex Krizhevsky, Ilya Sutskever, and Geoffrey~E Hinton.
\newblock Imagenet classification with deep convolutional neural networks.
\newblock \emph{Communications of the ACM}, 60\penalty0 (6):\penalty0 84--90, 2017.

\bibitem[Lanza et~al.(2022)Lanza, Morigi, Selesnick, and Sgallari]{lanza2022convex}
Alessandro Lanza, Serena Morigi, Ivan~W Selesnick, and Fiorella Sgallari.
\newblock Convex non-convex variational models.
\newblock In \emph{Handbook of Mathematical Models and Algorithms in Computer Vision and Imaging: Mathematical Imaging and Vision}, pages 1--57. Springer, 2022.

\bibitem[Leong et~al.(2022)Leong, O'Reilly, Soh, and Chandrasekaran]{leong2022optimal}
Oscar Leong, Eliza O'Reilly, Yong~Sheng Soh, and Venkat Chandrasekaran.
\newblock Optimal convex and nonconvex regularizers for a data source, 2022.

\bibitem[Li et~al.(2020)Li, Schwab, Antholzer, and Haltmeier]{nett_paper}
Housen Li, Johannes Schwab, Stephan Antholzer, and Markus Haltmeier.
\newblock {NETT:} solving inverse problems with deep neural networks.
\newblock \emph{Inverse Problems}, 36\penalty0 (6):\penalty0 065005, 2020.
\newblock \doi{10.1088/1361-6420/ab6d57}.

\bibitem[Lunz et~al.(2018)Lunz, {\"O}ktem, and Sch{\"o}nlieb]{ar_nips}
Sebastian Lunz, Ozan {\"O}ktem, and Carola-Bibiane Sch{\"o}nlieb.
\newblock Adversarial regularizers in inverse problems.
\newblock In \emph{Advances in Neural Information Processing Systems}, pages 8507--8516, 2018.

\bibitem[McCollough(2016)]{mayo_ct_challenge}
C.~McCollough.
\newblock {TU-FG-207A-04: Overview of the Low Dose CT Grand Challenge}.
\newblock \emph{Medical Physics}, 43:\penalty0 3759--3760, 2016.
\newblock \doi{10.1118/1.4957556}.

\bibitem[Milne et~al.(2022)Milne, Étienne Bilocq, and Nachman]{milne2022new}
Tristan Milne, Étienne Bilocq, and Adrian Nachman.
\newblock {A new method for determining Wasserstein 1 optimal transport maps from Kantorovich potentials, with deep learning applications}, 2022.

\bibitem[Mohimani et~al.(2009)Mohimani, Babaie-Zadeh, and Jutten]{4663911}
Hosein Mohimani, Massoud Babaie-Zadeh, and Christian Jutten.
\newblock A fast approach for overcomplete sparse decomposition based on smoothed $\ell ^{0}$ norm.
\newblock \emph{IEEE Transactions on Signal Processing}, 57\penalty0 (1):\penalty0 289--301, 2009.
\newblock \doi{10.1109/TSP.2008.2007606}.

\bibitem[Mukherjee et~al.(2021{\natexlab{a}})Mukherjee, Dittmer, Shumaylov, Lunz, \"Oktem, and Sch\"onlieb]{acr_arxiv}
S.~Mukherjee, S.~Dittmer, Z.~Shumaylov, S.~Lunz, O.~\"Oktem, and C.-B. Sch\"onlieb.
\newblock Learned convex regularizers for inverse problems.
\newblock \emph{arXiv preprint arXiv:2008.02839v2}, 2021{\natexlab{a}}.

\bibitem[Mukherjee et~al.(2021{\natexlab{b}})Mukherjee, Carioni, \"{O}ktem, and Sch\"{o}nlieb]{uar_neurips2021}
Subhadip Mukherjee, Marcello Carioni, Ozan \"{O}ktem, and Carola-Bibiane Sch\"{o}nlieb.
\newblock End-to-end reconstruction meets data-driven regularization for inverse problems.
\newblock In \emph{Advances in Neural Information Processing Systems}, volume~34, pages 21413--21425, 2021{\natexlab{b}}.

\bibitem[Mukherjee et~al.(2023)Mukherjee, Hauptmann, Öktem, Pereyra, and Schönlieb]{subho2023}
Subhadip Mukherjee, Andreas Hauptmann, Ozan Öktem, Marcelo Pereyra, and Carola-Bibiane Schönlieb.
\newblock Learned reconstruction methods with convergence guarantees: A survey of concepts and applications.
\newblock \emph{IEEE Signal Processing Magazine}, 40\penalty0 (1):\penalty0 164--182, 2023.
\newblock \doi{10.1109/MSP.2022.3207451}.
\newblock URL \url{https://ieeexplore.ieee.org/abstract/document/10004773}.

\bibitem[Peng et~al.(2019)Peng, Jalali, and Yuan]{peng2019auto}
Pei Peng, Shirin Jalali, and Xin Yuan.
\newblock Auto-encoders for compressed sensing.
\newblock In \emph{NeurIPS 2019 Workshop on Solving Inverse Problems with Deep Networks}, 2019.

\bibitem[Pieper and Petrosyan(2022)]{pieper2022nonconvex}
Konstantin Pieper and Armenak Petrosyan.
\newblock Nonconvex regularization for sparse neural networks.
\newblock \emph{Applied and Computational Harmonic Analysis}, 61:\penalty0 25--56, 2022.
\newblock ISSN 1063-5203.
\newblock \doi{10.1016/j.acha.2022.05.003}.
\newblock URL \url{https://www.sciencedirect.com/science/article/pii/S1063520322000434}.

\bibitem[Pöschl(2009)]{poschl2009overview}
C.~Pöschl.
\newblock An overview on convergence rates for tikhonov regularization methods for non-linear operators.
\newblock \emph{Journal of Inverse and Ill-posed Problems}, 17\penalty0 (1):\penalty0 77--83, 2009.
\newblock \doi{10.1515/JIIP.2009.009}.

\bibitem[Reehorst and Schniter(2019)]{red_schniter}
E.~T. Reehorst and P.~Schniter.
\newblock Regularization by denoising: clarifications and new interpretations.
\newblock \emph{IEEE Transactions on Computational Imaging}, 5\penalty0 (1):\penalty0 52--67, 2019.

\bibitem[Romano et~al.(2017)Romano, Elad, and Milanfar]{romano2017RED}
Yaniv Romano, Michael Elad, and Peyman Milanfar.
\newblock {The little engine that could: Regularization by denoising (RED)}.
\newblock \emph{SIAM Journal on Imaging Sciences}, 10\penalty0 (4):\penalty0 1804--1844, 2017.

\bibitem[Roth and Black(2009)]{roth2009fields}
Stefan Roth and Michael~J Black.
\newblock Fields of experts.
\newblock \emph{International Journal of Computer Vision}, 82:\penalty0 205--229, 2009.

\bibitem[Ryu et~al.(2019)Ryu, Liu, Wang, Chen, Wang, and Yin]{pmlr-v97-ryu19a}
Ernest Ryu, Jialin Liu, Sicheng Wang, Xiaohan Chen, Zhangyang Wang, and Wotao Yin.
\newblock Plug-and-play methods provably converge with properly trained denoisers.
\newblock In \emph{Proceedings of the 36th International Conference on Machine Learning}, volume~97, pages 5546--5557. PMLR, 09--15 Jun 2019.

\bibitem[Thibault(2021)]{thibault2021unilateral}
Lionel Thibault.
\newblock \emph{Unilateral variational analysis in Banach spaces}.
\newblock World Scientific, 2021.

\bibitem[Ulyanov et~al.(2018)Ulyanov, Vedaldi, and Lempitsky]{ulyanov2018deepImagePrior}
Dmitry Ulyanov, Andrea Vedaldi, and Victor Lempitsky.
\newblock Deep image prior.
\newblock In \emph{Proceedings of the IEEE Conference on Computer Vision and Pattern Recognition}, pages 9446--9454, 2018.

\bibitem[Venkatakrishnan et~al.(2013)Venkatakrishnan, Bouman, and Wohlberg]{venkat_pnp_6737048}
Singanallur~V. Venkatakrishnan, Charles~A. Bouman, and Brendt Wohlberg.
\newblock Plug-and-play priors for model based reconstruction.
\newblock In \emph{2013 IEEE Global Conference on Signal and Information Processing}, pages 945--948, 2013.
\newblock \doi{10.1109/GlobalSIP.2013.6737048}.

\bibitem[Wang et~al.(2023)Wang, Luo, Xie, Jin, Chen, Cui, and Liang]{wang2023convex}
Huayu Wang, Chen Luo, Taofeng Xie, Qiyu Jin, Guoqing Chen, Zhuo-Xu Cui, and Dong Liang.
\newblock Convex latent-optimized adversarial regularizers for imaging inverse problems, 2023.

\bibitem[Wang et~al.(2004)Wang, Bovik, Sheikh, and Simoncelli]{ssim_paper_2004}
Z.~Wang, A.~C. Bovik, H.~R. Sheikh, and E.~P. Simoncelli.
\newblock Image quality assessment: From error visibility to structural similarity.
\newblock \emph{IEEE Transactions on Image Processing}, 13\penalty0 (4):\penalty0 600--612, 2004.

\end{thebibliography}
%\section*{References}

\end{document}